\newcommand{\paran}[1]{\left( #1 \right)}
\newcommand{\euc}[1]{\| #1 \|_2}
\newcommand{\maha}[1]{\| #1 \|_{\textsf{RM}}}
\newtheorem{definition}{Definition}
\newtheorem{lemma}{Lemma}
\newtheorem{theorem}{Theorem}
\title{A Differentially Private Text Perturbation Method \\Using a Regularized Mahalanobis Metric}
\author{Zekun Xu \\
  Amazon Alexa \\
  Seattle, WA USA \\
  \texttt{\normalsize zeku@amazon.com} \\\And
  Abhinav Aggarwal \\
  Amazon Alexa\\
  Seattle, WA USA \\
  \texttt{\normalsize aggabhin@amazon.com} \\\And
  Oluwaseyi Feyisetan \\
  Amazon Alexa \\
  Seattle, WA USA\\
  \texttt{\normalsize sey@amazon.com} \\\And
  Nathanael Teissier \\
  Amazon Alexa \\
  Arlington, VA USA\\
  \texttt{\normalsize natteis@amazon.com}}
\date{}
\begin{document}
\maketitle

\begin{abstract}
Balancing the privacy-utility tradeoff is a crucial requirement of many practical machine learning systems that deal with sensitive customer data. A popular approach for privacy-preserving text analysis is noise injection, in which text data is first mapped into a continuous embedding space, perturbed by sampling a spherical noise from an appropriate distribution, and then projected back to the discrete vocabulary space. While this allows the perturbation to admit the required metric differential privacy, often the utility of downstream tasks modeled on this perturbed data is low because the spherical noise does not account for the variability in the density around different words in the embedding space. In particular, words in a sparse region are likely unchanged even when the noise scale is large. %Using the global sensitivity of the mechanism can potentially add too much noise to the words in the dense regions of the embedding space, causing a high utility loss, whereas using local sensitivity can leak information through the scale of the noise added. 

In this paper, we propose a text perturbation mechanism based on a carefully designed regularized variant of the Mahalanobis metric to overcome this problem. For any given noise scale, this metric adds an elliptical noise to account for the covariance structure in the embedding space. This heterogeneity in the noise scale along different directions helps ensure that the words in the sparse region have sufficient likelihood of replacement without sacrificing the overall utility. We provide a text-perturbation algorithm based on this metric and formally prove its privacy guarantees. Additionally, we empirically show that our mechanism improves the privacy statistics to achieve the same level of utility as compared to the state-of-the-art Laplace mechanism.
\end{abstract}

\section{Introduction}

Machine learning has been successfully utilized in a wide variety of real world applications including information retrieval, computer graphics, speech recognition, and text mining. 
Technology companies like Amazon, Google, and Microsoft already provide MLaaS (Machine Learning as a Service), where customers can input their datasets for model training and receive black-box prediction results as output. 
However, those datasets may contain personal and potentially sensitive information, which can be exploited to identify the individuals in the datasets, even if it has been anonymized~\cite{sweeney97,narayanan08}.
Removing personally identifiable information is often inadequate, since having access to the summary statistics on the dataset has been shown to be sufficient to infer individual's membership in the dataset
with high probability \cite{homer08,sankararaman09,dwork2015d}.
Moreover, machine learning models themselves can reveal information on the training data.
In particular, sophisticated deep neural networks for natural language processing tasks like next word prediction or neural machine translation, often tend to memorize their training data, which makes them vulnerable to leaking information about their training data~\cite{shokri17,salem2018ml}. 

To provide a quantifiable privacy guarantee against such information leakage, Differential Privacy (DP) has been adopted as a standard framework for privacy-preserving analysis in statistical databases~\cite{dwork2006calibrating,dwork2008differential,dwork2014algorithmic}. Intuitively, a randomized algorithm is differentially private if the output distributions from two neighboring databases are indistinguishable. 
However, a direct application of DP to text analysis can be too restrictive because it requires a lower bound on the probability of any word to be replaced by any other word in the vocabulary.
%The two major branches of differential privacy are: central differential privacy (CDP) and local differential privacy (LDP).
%In the CDP framework \cite{dwork2006our}, an aggregate function is perturbed by a trusted curator,
%which is usually in the form of output perturbation, objective pertubation, and gradient perturbation \cite{chaudhuri2011differentially,kifer2012private,bassily2014private,wang2017differentially,erlingsson2019amplification}.
%On the other hand, the LDP framework \cite{kasiviswanathan2011can} perturbs the input data before any processing takes place. 
%An advantage of LDP over CDP is that LDP no longer requires trusted curator because each user can perturb their own data to achieve plausible deniability \cite{kairouz2014extremal,qin2016heavy,cormode2018privacy}.
%However, a direct application of the LDP framework to text analysis can be too restrictive because it requires a lower bound on the probability of any word to be substituted by any other word in the vocabulary.

Metric differential privacy arises as a generalization of local differential privacy \cite{kasiviswanathan2011can}, 
which originated in protecting location privacy such that 
locations near the user's location are assigned with higher probability while those far away are given negligible probability \cite{andres2013geo,chatzikokolakis2013broadening}.
In the context of privacy-preserving text analysis, metric differential privacy implies that
the indistinguishability of the output distributions of any two words in the vocabulary is scaled by their distance, where the distance metrics used in the literature include Hamming distance (reduced to DP), Manhattan distance \cite{chatzikokolakis2015constructing}, Euclidean distance \cite{chatzikokolakis2013broadening,fernandes2019generalised,feyisetan2020privacy}, Chebyshev distance \cite{wagner2018technical}, hyperbolic distance \cite{feyisetan2019leveraging}.

In this paper, we propose a  novel privacy-preserving text perturbation method by adding an elliptical noise to word embeddings in the Euclidean space, where the scale of the noise is calibrated by the regularized Mahalanobis norm (formally defined in Section 3).
We compare our method to the existing multivariate Laplace mechanism for privacy-preserving text analysis in the Euclidean space \cite{fernandes2019generalised,feyisetan2020privacy}.
In both papers, text perturbation is implemented by adding a spherical noise sampled from multivariate Laplace distribution to the original word embedding. 
However, the spherical noise does not account for the structure in the embedding space.
In particular, words in a sparse region are likely unchanged even when the scale of noise is large.
This can potentially result in severe privacy breach when sensitive words do not get perturbed. 
To increase the substitution probability of words in sparse regions, the scale of noise has to be large in the multivariate Laplace mechanism, which will hurt the downstream machine learning utility. 

We address this problem by adding an elliptical noise to word embeddings according to the covariance structure in the embedding space.
The intuition is that given a fixed scale of noise, we want to stretch the noise equidistant contour in the direction so that the substitution probability of words in the sparse region is increased on average.
Intuitively, this direction is the one that explains the largest variability in the word embedding vectors in the vocabulary.
We prove the theoretical metric differential privacy guarantee of the proposed method.
Furthermore, we use empirical analysis to show that the proposed method significantly improves the privacy statistics while achieves the same level of utility as compared to the multivariate Laplace mechanism.
Our main contributions are as follows:
\begin{itemize}
    \item We develop a novel Mahalanobis mechanism for differentially private text perturbation, which calibrates the elliptical noise by accounting for the covariance structure in the word embedding space.
    \item A theoretical metric differential privacy proof is provided for the proposed method. 
    \item We compare the privacy statistics and utility results between our method and the multivariate Laplace Mechanism through experiments, which 
    demonstrates that our method has significantly better privacy statistics while preserving the same level of utility.
\end{itemize}

\section{Related Works}

Privacy-preserving text analysis is a well-studied problem in the literature \cite{hill2016effectiveness}. One of the common approaches is to identify sensitive terms (like personally identifiable information) in a document and replace them with some more general terms \cite{cumby2011machine,anandan2012t,sanchez2016c}.
Another line of research achieves text redaction by injecting additional words into the original text without detecting sensitive entities \cite{domingo2009h,pang2010embellishing,sanchez2013knowledge}.
However, those methods are shown to be vulnerable to reidentification attacks \cite{petit2015peas}.
 
In order to provide a quantifiable theoretical privacy guarantee, the differential privacy framework \cite{dwork2008differential} has been used for privacy-preserving text analysis. 
In the DPText model \cite{beigi2019not}, an element-wise univariate Laplace noise is added to the pre-trained auto-encoders to provide privacy for text representations.
  %\sey{comment}%I think this Beigi paper is doing something different and the "text representation" they highlight might as well be embeddings, however, I think the way it is worded makes it seem like their approach is similar to ours such that I had to go glance through the paper. For this lit rev, I think what will be useful to point out is that they are using auto-encoders and thats where the noise is added, vs word embeddings}
Another approach for privacy-preserving text perturbation is in the metric differential privacy framework \cite{andres2013geo,chatzikokolakis2013broadening}, an extended notion of local differential privacy \cite{kasiviswanathan2011can}, which adds noise to the pre-trained word embeddings.
Metric differential privacy requires that the indistinguishability of the output distributions of any two words in the vocabulary be scaled by their distance, which reduces to differential privacy when Hamming distance is used \cite{chatzikokolakis2013broadening}.
A hyperbolic distance metric \cite{feyisetan2019leveraging} was proposed  to provide privacy by perturbing vector representations of words, but it requires specialized training of word embeddings in the high-dimensional hyperbolic space.
For the word embeddings in the Euclidean space \cite{fernandes2019generalised,feyisetan2020privacy}, text perturbation is implemented by sampling independent spherical noise from multivariate Laplace distributions.
The former work \cite{fernandes2019generalised} subsequently used an Earth mover's metric to derive a Bag-of-Words representation on the text, whereas the latter \cite{feyisetan2020privacy} directly worked on the word-level embeddings. 
Since we work with word embeddings in the Euclidean space, we compare our method to the multivariate Laplace mechanism for text perturbation in those two papers \cite{fernandes2019generalised,feyisetan2020privacy}.

Mahalanobis distance has been used as a sensitivity metric for differential privacy in functional data \cite{hall2012new,hall2013differential} and differentially private outlier analysis \cite{okada2015differentially}.
Outside the realm of text analysis, Mahalanobis distance is a common tool in cluster analysis, pattern recognition, and anomaly detection \cite{de2000mahalanobis,xiang2008learning,warren2011use,zhao2015mahalanobis,zhang2015low}.
 
\section{Methodology}

We begin by formally defining Euclidean norm and the regularized Mahalanobis norm.

\begin{definition}[Euclidean Norm]\label{def1}
For any vector $x\in\mathbb{R}^m$, its Euclidean norm is:
$\euc{x}=\sqrt{x^\intercal x}.$
\end{definition}

\begin{definition}[Mahalanobis Norm]
For any vector $x\in\mathbb{R}^m$, and a positive definite matrix $\Sigma$, its Mahalanobis norm is:
$$\|x\|_{\textrm{M}}=\sqrt{x^\intercal \Sigma^{-1}x}.$$
\end{definition}

\begin{definition}[Regularized Mahalanobis Norm]
For any vector $x\in\mathbb{R}^m$, $\lambda\in[0,1]$, and a positive definite matrix $\Sigma$, its regularized Mahalanobis norm is:
$$\maha{x}=\sqrt{x^\intercal \{\lambda\Sigma+(1-\lambda)I_m\}^{-1}x}.$$
\end{definition}

From the definitions above, $\lambda$ can be considered as a tuning parameter. When $\lambda=0$, the regularized Mahalanobis norm reduces to the Euclidean norm; when $\lambda=1$, the regularized norm reduces to the Mahalanobis norm \cite{maha36} 

Note that for any $\eta>0$, the trajectory of $\{y\in\mathbb{R}^m: \|y-x\|_2=\eta\}$ is spherical, whereas the trajectory of $\{y\in\mathbb{R}^m:\maha{y-x}=\eta\}$ is elliptical unless $\lambda=0$.
We will exploit this key difference in the geometry of equidistant contour between the Euclidean norm and the regularized Mahalanobis norm to motivate our text perturbation method.
The proposed regularized Mahalanobis norm is a type of shrinkage estimator \cite{daniels2001shrinkage,schafer2005shrinkage,couillet2014large}, which is commonly used to estimate the covariance matrix of high-dimensional vectors so as to ensure the stability of the estimator.
The matrix $\Sigma$ in the regularized Mahalanobis norm controls the direction to which the equidistant contour in the noise distribution is stretched, 
while the parameter $\lambda$ controls the degree of the stretch.

\begin{definition}[Metric Differential Privacy.]\label{def5}
For any $\epsilon>0$, a randomized algorithm $M:\mathcal{X}\to\mathcal{Y}$ satisfies $\epsilon$ $d_{\mathcal{X}}-$privacy if for any $x,x'\in\mathcal{X}$ and $y\in\mathcal{Y}$, the following holds:
$$\frac{\Pr\{M(x)=y\}}{\Pr\{M(x')=y\}}\leq \exp\{\epsilon d(x,x')\}.$$
\end{definition}
\begin{figure}[t]
    \centering
    \includegraphics[width=0.5\textwidth]{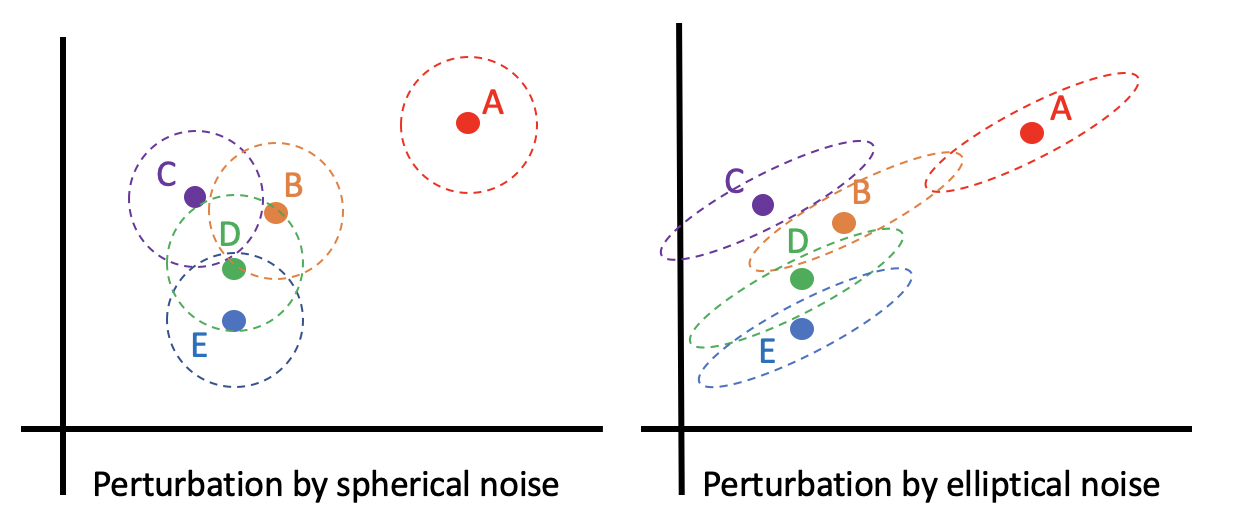}
    \caption{{\bf{Left:}} dotted trajectories represent perturbation by spherical noise in multivariate Laplace mechanism. {\bf{Right:}} dotted trajectories represent perturbation by elliptical noise in proposed Mahalanobis mechanism. The scale of noise is the same in both plots.}
    \label{fig1}
\end{figure}
Metric differential privacy ($d_{\mathcal{X}}-$privacy) originated in privacy-preserving geolocation studies \cite{andres2013geo,chatzikokolakis2013broadening}, where the metric $d$ is Euclidean distance. 
It has been extended to quantifying privacy guarantee in text analysis, which states that for any two words $w$, $w'$ in the vocabulary $\mathcal{W}$, the likelihood ratio of observing any $\hat{w}\in\mathcal{W}$ is bounded by $\epsilon d(w,w')$, where $d(w,w')=\|\phi(w)-\phi(w')\|_2$ for an embedding function $\phi:\mathcal{W}\to\mathbb{R}^m$.

The multivariate Laplace mechanism \cite{fernandes2019generalised,feyisetan2020privacy} perturbs the word embedding $\phi(w)$ by adding a random spherical noise $Z$ sampled from density $f_Z(z)\propto e^{\epsilon\|z\|_2}$, and then find the nearest neighbor in the embedding space as the output of the mechanism.

The left panel in Figure \ref{fig1} illustrates the text perturbation by spherical noise in the multivariate Laplace mechanism. Here $A$ is in the sparse region in the two-dimensional embedding space. 
Given the privacy budget $\epsilon$, $A$ has a small probability of being substituted by other words because its expected perturbation (dotted trajectory) still has itself as the nearest neighbor. 
The right panel in Figure \ref{fig1} shows text perturbation in the proposed Mahalanobis mechanism, where the word embeddings are redacted by an elliptical noise at the same privacy budget $\epsilon$. 
The matrix $\Sigma$ is taken to be the sample covariance matrix of the word embeddings scaled by the mean sample variance, so that the noise contour is stretched toward the direction that explains the largest variability in the embedding space.
The purpose of the scaling step is to ensure that the scale of the elliptical noise is the same as the scale of the spherical noise.
By transforming the spherical noise contour into an elliptical contour, we increase the substitution probability of $A$ in the sparse region, thus improving the privacy guarantee. 
Meanwhile, since the scale of noise does not change, the utility is preserved at the same level.
This is an illustrative example demonstrating the intuition and motivation of the proposed Mahalanobis mechanism.

\begin{algorithm*}[h]
\textbf{Input: }Dimension $m$, a positive definite matrix $\Sigma$, tuning parameter $\lambda\in[0,1]$\\
 \SetAlgoLined
 Sample an $m$-dimensional random vector $N$ from a multivariate normal distribution with mean zero and identity covariance matrix. \\
 Normalize $X=N/\euc{N}$. \\
 Sample $Y$ from a Gamma distribution with shape parameter $m$ and scale parameter $1/\epsilon$.\\
 %where the probability density function is as follows:
 %$$f_Y(y)=\frac{\epsilon^d}{\Gamma(d)}y^{d-1}\exp(-\epsilon y).$$\\
 Return $Z=Y\cdot\{\lambda\Sigma+(1-\lambda) I_m\}^{1/2} X$.
 \caption{Sampling from $f_Z(z)\propto \exp(-\epsilon\maha{z})$}
 \label{alg1}
\end{algorithm*}
  
 \begin{algorithm*}[h]
\textbf{Input: }String $s=w_1w_2\ldots w_n$, privacy parameter $\epsilon>0$, scaled sample covariance matrix $\Sigma$, tuning parameter $\lambda\in[0,1]$\\
 \SetAlgoLined
 \For{$i \in \{1,\ldots,n\}$}{
%  	Compute embedding $\phi_i = \phi(w_i)$ \\
 	Sample $Z$ from $f_Z(z)\propto\exp(-\epsilon\maha{z})$ using Algorithm 1.\\
 	Obtain the perturbed embedding $\hat{\phi}_i=\phi(w_i)+Z$.\\
 	Replace $w_i$ with $\hat{w}_i=\arg\min_{w\in\mathcal{W}}\euc{\phi(w)-\hat{\phi}_i}$.\\}
 \Return $\tilde{s}=\hat{w}_1\hat{w}_2\dots \hat{w}_n$.
 \caption{The Mahalanobis Mechanism}
 \label{alg2}
\end{algorithm*}

Our proposed Mahalanobis mechanism for text perturbation shares the same general structure as the multivariate Laplace mechanism. The key difference is that the spherical noise $f_Z(z)\propto \exp\paran{\epsilon\|z\|_2}$ in the multivariate Laplace mechanism is replaced by the elliptical noise sampled from density $f_Z(z)\propto\exp(-\epsilon\maha{z})$, which can be efficiently performed via 
Algorithm \ref{alg1}.

 An overview for the proposed Mahalanobis mechanism is presented in Algorithm \ref{alg2}. 
 When $\lambda=0$, the proposed Mahalanobis mechanism reduces to the multivariate Laplace mechanism.
A heuristic method for choosing the tuning parameter $\lambda$ is to find the value of $\lambda$ that maximizes the improvement in the privacy guarantee while maintaining the same level of utility. This can be done through empirical privacy and utility experiments as described in Section 5.  
The input $\Sigma$ in Algorithm \ref{alg2} is computed by scaling the sample covariance matrix of the word embeddings by the mean sample variance so as to guarantee that the trace of $\Sigma$ equals the trace of $I_m$. 
%This scaling step is important because we want to ensure the scale of noise added in the Mahalanobis mechanism is comparable to that in the multivariate Laplace mechanism.
Since $\Sigma$ is a scaled counterpart of the sample covariance matrix, it will stretch the elliptical noise toward the direction with the largest variability in the word embedding space, which maximizes the overall expected probability of words being substituted. 
We remark that in order to maximize the substitution probability for each individual word, a personalized covariance matrix $\Sigma_w$ can be computed in the neighborhood of each word.
This is beyond the scope of this paper and we leave it as future work.

\section{Theoretical Properties}
\begin{lemma}\label{lemma1}
The random variable $Z$ returned from Algorithm \ref{alg1} has a probability density function of the form $f_Z(z)\propto \exp(-\epsilon\maha{z})$.
\end{lemma}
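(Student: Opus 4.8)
The plan is to peel Algorithm \ref{alg1} apart into a radial part and an angular part, identify the density of an intermediate spherical vector, and then push that density through the linear map. Write $A=\{\lambda\Sigma+(1-\lambda)I_m\}^{1/2}$, which is symmetric positive definite since $\lambda\Sigma+(1-\lambda)I_m$ is, so that the returned vector is $Z=Y\,A\,X$. Because $N$ is an isotropic Gaussian, its normalization $X=N/\euc{N}$ is uniform on the unit sphere $S^{m-1}$ and is independent of the radial magnitude $Y\sim\mathrm{Gamma}(m,1/\epsilon)$. First I would establish the density of the intermediate vector $W=YX$, and then obtain $Z=AW$ by a single change of variables.

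For the intermediate step, I would switch to polar coordinates on $\mathbb{R}^m$. By construction the radial component $\euc{W}=Y$ has the Gamma density proportional to $r^{m-1}e^{-\epsilon r}$, while the angular component is uniform on $S^{m-1}$ and independent of the radius. Converting this radial--angular law to a Cartesian density at a point $w$ with $\euc{w}=r$ introduces the surface-area Jacobian $r^{m-1}$ in the denominator, so the two factors of $r^{m-1}$ cancel and leave $f_W(w)\propto e^{-\epsilon r}=\exp(-\epsilon\euc{w})$. This is exactly the spherical (multivariate Laplace) density, and the clean cancellation is engineered by the choice of Gamma shape parameter equal to the dimension $m$; I expect this polar change of variables, together with verifying the cancellation, to be the main technical step.

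Finally I would apply the invertible linear map $Z=AW$. A change of variables gives $f_Z(z)=|\det A|^{-1}f_W(A^{-1}z)$, and the constant $|\det A|^{-1}$ is absorbed into the normalizing constant. It then remains to check the exponent: since $A$ is symmetric with $A^2=\lambda\Sigma+(1-\lambda)I_m$, one has $\euc{A^{-1}z}^2=z^\intercal A^{-1}A^{-1}z=z^\intercal\{\lambda\Sigma+(1-\lambda)I_m\}^{-1}z=\maha{z}^2$, so $\euc{A^{-1}z}=\maha{z}$. Substituting into $f_W$ yields $f_Z(z)\propto\exp(-\epsilon\maha{z})$, as claimed. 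The only points requiring care beyond the polar computation are that $A$ is genuinely invertible and symmetric (guaranteed by positive definiteness of $\lambda\Sigma+(1-\lambda)I_m$ for $\lambda\in[0,1]$) and that $X$ and $Y$ are independent, which holds because the magnitude and direction of an isotropic Gaussian are independent.
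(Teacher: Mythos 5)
Your proof is correct and follows essentially the same route as the paper: both decompose $Z$ into the intermediate spherical vector $YX$, obtain its density $\propto e^{-\epsilon\euc{\cdot}}$ via the cancellation of the $r^{m-1}$ radial factor against the Gamma$(m,1/\epsilon)$ density, and then push through the linear map $\{\lambda\Sigma+(1-\lambda)I_m\}^{1/2}$. If anything, your final step is slightly more explicit than the paper's (you write out the Jacobian change of variables and verify $\euc{A^{-1}z}=\maha{z}$, where the paper simply asserts the conclusion), so there is nothing to fix.
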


\begin{proof}
Define $U=YX$. Note that conditional on $Y$, $U$ follows a uniform distribution on a sphere with radius $y$ in the $m-$dimensional space, which implies $f_{U|Y}(u|y)\propto 1/y^{m-1}$ when $\sum_{i=1}^m u_i^2=y^2$ and 0 otherwise.
Therefore,
\begin{align*}
    f_U(u)=&\int f_{U|Y}(u|y)f_Y(y) \delta(y=\euc{u}) dy \\
    \propto&\int \frac{1}{y^{m-1}} \frac{\epsilon^m}{\Gamma(m)}y^{m-1}e^{-\epsilon y}\delta(y=\euc{u}) dy \\
   \propto & e^{-\epsilon \sqrt{u^\intercal u}},
\end{align*}
where $\delta(\cdot)$ is the Dirac delta function.
Since $Z=\{\lambda\Sigma+(1-\lambda) I_m\}^{1/2}U$, which is well-defined because $\lambda\in[0,1]$ and $\Sigma$ is positive definite, it follows that:
\begin{align*}
    f_Z(z)\propto & \exp\paran{-\epsilon\sqrt{z^\intercal\{\lambda\Sigma+(1-\lambda)I_m\}^{-1}z}},
\end{align*}
The result in the lemma follows by definition.
\end{proof}

\begin{theorem}\label{theorem1}
For any given $\epsilon > 0$ and $\lambda \in [0,1]$, the Mahalanobis mechanism from Algorithm~\ref{alg2} satisfies $\epsilon$ $d_\chi$-privacy with respect to the Regularized Mahalanobis Norm.
\end{theorem}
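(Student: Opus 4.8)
The plan is to reduce the statement to a pointwise bound on the noise density and then lift it through the (deterministic) nearest-neighbor step and the per-token composition. First I would observe that Algorithm~\ref{alg2} processes each token independently, drawing a fresh noise sample $Z$ and returning the Euclidean nearest neighbor of $\phi(w_i)+Z$; it therefore suffices to establish the guarantee for the single-token map $w\mapsto\hat w$ and then compose. For a single token and a target word $\hat w$, write the output probability as
$$\Pr\{M(w)=\hat w\}=\int_{C_{\hat w}} f_Z\paran{y-\phi(w)}\,dy,$$
where $C_{\hat w}=\{y\in\mathbb{R}^m:\hat w=\arg\min_{w'\in\mathcal{W}}\euc{\phi(w')-y}\}$ is the Voronoi cell of $\hat w$ and $f_Z$ is the density identified in Lemma~\ref{lemma1}.

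The heart of the argument is a pointwise ratio bound. For any fixed $y$ and any two words $w,w'$, Lemma~\ref{lemma1} gives
$$\frac{f_Z\paran{y-\phi(w)}}{f_Z\paran{y-\phi(w')}}=\exp\paran{\epsilon\left[\maha{y-\phi(w')}-\maha{y-\phi(w)}\right]}.$$
Here I would invoke the triangle inequality for $\maha{\cdot}$: since $\lambda\in[0,1]$ and $\Sigma$ is positive definite, the matrix $\lambda\Sigma+(1-\lambda)I_m$ is symmetric positive definite, so $\maha{x}=\sqrt{x^\intercal\{\lambda\Sigma+(1-\lambda)I_m\}^{-1}x}$ is the norm induced by a genuine inner product and hence subadditive. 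This yields $\maha{y-\phi(w')}-\maha{y-\phi(w)}\le\maha{\phi(w)-\phi(w')}$, so the density ratio is bounded by $\exp\paran{\epsilon\maha{\phi(w)-\phi(w')}}$ for \emph{every} $y$.

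Because this bound is uniform in $y$, it transfers directly to the integrals over the common cell $C_{\hat w}$, giving
$$\frac{\Pr\{M(w)=\hat w\}}{\Pr\{M(w')=\hat w\}}\le\exp\paran{\epsilon\maha{\phi(w)-\phi(w')}}$$
for the single-token mechanism; equivalently, the additive-noise step alone is $\epsilon$ $d_\chi$-private and the nearest-neighbor projection is deterministic post-processing, which cannot degrade the guarantee. Finally, since the tokens are perturbed with independent noise, the joint likelihood ratio factorizes and multiplying the per-token bounds across $i=1,\dots,n$ composes the guarantee into $\exp\paran{\epsilon\sum_{i}\maha{\phi(w_i)-\phi(w'_i)}}$ for two equal-length strings.

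I expect the only genuinely delicate point to be the triangle-inequality step — specifically, confirming that the regularized mixing $\lambda\Sigma+(1-\lambda)I_m$ remains positive definite (and thus its inverse does too) so that $\maha{\cdot}$ is a bona fide norm and subadditivity is available. Everything downstream is routine: monotonicity of integration transfers the pointwise bound to the Voronoi integrals, and independence of the per-token noise handles the composition.
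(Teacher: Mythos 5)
Your proof is correct and follows essentially the same route as the paper's: the same reduction to a single token, the same Voronoi-cell integral representation of $\Pr\{M(w)=\hat w\}$, the same pointwise density-ratio bound via the triangle inequality for $\maha{\cdot}$, and the same independence-based composition across tokens. The only difference is cosmetic: you justify subadditivity of $\maha{\cdot}$ abstractly, as the norm induced by the inner product defined by the symmetric positive definite matrix $\{\lambda\Sigma+(1-\lambda)I_m\}^{-1}$, whereas the paper proves the same fact concretely via the spectral decomposition $\Sigma=Q\Lambda Q^\intercal$ and the change of variables $\tilde v=\Omega^{1/2}Q^\intercal v$, which reduces the inequality to the ordinary Euclidean triangle inequality.
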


\begin{proof}
It suffices to show that for any strings $s=[w_1\ldots w_n]$, $s'=[w_1'\ldots w_n']$, $\hat{s}=[\hat{w}_1\ldots\hat{w}_n]$, $\epsilon>0$, $\lambda\in[0,1]$, 
and positive definite matrix $\Sigma$, 
$$\frac{\Pr\{M(s)=\hat{s}\}}{\Pr\{M(s')=\hat{s}\}}\leq e^{\epsilon\sum_{i=1}^n\maha{\phi(w_i)-\phi(w'_i)}},$$
where $M:\mathcal{W}^n\to\mathcal{W}^n$ is the Mahalanobis mechanism and $\phi: \mathcal{W}\to\mathbb{R}^m$ is the embedding function.

We begin by showing that for any $w,w',\hat{w}\in\mathcal{W}$, it holds that
the probability $\Pr\{M(w)=\hat{w}\}$ is at most $e^{\epsilon \maha{\phi(w)-\phi(w')}}$ times the probability $\Pr\{M(w')=\hat{w}\}$. We define $C_{\hat{w}}=\{v\in\mathbb{R}^m: \euc{v-\phi(\hat{w})} < \min_{w\in\mathcal{W} \backslash \hat{w}}\euc{v-\phi(w)}\}$ be the set of vectors $v$ that are closer to $\hat{w}$ than any other word in the embedding space. Let $Z$ be sampled from $f_Z(z)\propto\exp(-\epsilon\maha{z})$ 
by Algorithm \ref{alg1}, 
\begin{align*}
    \Pr\{M(w)=\hat{w}\} &= \Pr\{\phi(w)+Z\in C_{\hat{w}}\} \\
    = \int_{C_{\hat{w}}} f_Z\paran{v-\phi(w)} dv &= \int_{C_{\hat{w}}} e^{-\epsilon\maha{v-\phi(w)}} dv,
\end{align*} 
where the last step follows from Lemma~\ref{lemma1}. Since $\Sigma$ is positive definite, it admits a spectral decomposition $\Sigma=Q\Lambda Q^\intercal$, where
$Q^\intercal = Q^{-1}$, and $\Lambda$ is a diagonal matrix with positive entries $\xi_1,\ldots,\xi_m$.
Then we can rewrite $\{\lambda\Sigma+(1-\lambda)
    I_m\}^{-1} = Q \Omega Q^\intercal$, where $\Omega^{-1}=\lambda\Lambda+(1-\lambda)I_m$.
Define $\tilde{v}=\Omega^{1/2} Q^\intercal v$ and
$\tilde{\phi}(w)=\Omega^{1/2}Q^\intercal \phi(w)$.
By the triangle inequality, the following hold:
\begin{align*}
    & e^{-\epsilon \sqrt{\{v-\phi(w)\}^\intercal\{\lambda\Sigma+(1-\lambda)
    I_d\}^{-1}\{v-\phi(w)\}}}\\
    =& e^{-\epsilon \sqrt{\{v-\phi(w)\}^\intercal Q\Omega^{1/2}\Omega^{1/2}Q^\intercal \{v-\phi(w)\}}} \\
    =& e^{-\epsilon\euc{\tilde{v}-\tilde{\phi}(w)}}\\
    =& \frac{e^{-\epsilon\euc{\tilde{v}-\tilde{\phi}(w')}}}{e^{-\epsilon\euc{\tilde{v}-\tilde{\phi}(w')}}} e^{-\epsilon\euc{\tilde{v}-\tilde{\phi}(w)}}\\
    \leq & e^{-\epsilon\euc{\tilde{v}-\tilde{\phi}(w')}} e^{\epsilon\euc{\tilde{\phi}(w)-\tilde{\phi}(w')}}\\
    =&
    e^{-\epsilon\maha{v-\phi(w')}}e^{\epsilon\maha{\phi(w)-\phi(w')}}.
\end{align*}
The probability ratio is computed by:
\begin{align*}
    \frac{\Pr\{M(w)=\hat{w}\}}{\Pr\{M(w')=\hat{w}\}} &= \frac{\int_{C_{\hat{w}}} e^{-\epsilon\maha{v-\phi(w)}} dv}{\int_{C_{\hat{w}}} e^{-\epsilon\maha{v-\phi(w')}} dv}\\
    &\le e^{\epsilon\maha{\phi(w)-\phi(w')}}.
\end{align*}
Finally, since each word in the string is processed independently,
\begin{align*}
    \frac{\Pr(M(s)=\hat{s})}{\Pr(M(s')=\hat{s})}&=\prod_{i=1}^n\biggl( \frac{\Pr(M(w_i)=\hat{w}_i)}{\Pr(M(w'_i)=\hat{w}_i)}\biggr)\\
    &\leq\prod_{i=1}^ne^{\epsilon \maha{\phi(w_i)-\phi(w_i')}}\\
    &\leq e^{\epsilon\sum_{i=1}^n\maha{\phi(w_i)-\phi(w'_i)}}. \qedhere
\end{align*}
\end{proof}

Next, we relate the proved theoretical guarantee of the Mahalanobis mechanism to that of the multivariate Laplace mechanism \cite{fernandes2019generalised,feyisetan2020privacy}, which enjoys metrical differential privacy guarantee with respect to the Euclidean metric. The following lemma will help establish our result.

\begin{lemma}
Let $v \in \mathbb{R}^2$ and $m = trace(\Sigma)$. Let $c > 0$ be a lower bound on the smallest eigenvalue of $\Sigma$. Then, the following bounds hold:
\begin{align*}
   \frac{||v||_2}{\sqrt{\lambda m + 1 - \lambda}} \leq \maha{v} \leq \frac{||v||_2}{\sqrt{\lambda c + 1 - \lambda}}.
\end{align*}
\end{lemma}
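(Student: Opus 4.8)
The plan is to reduce the claimed two-sided bound to a standard Rayleigh-quotient estimate for the quadratic form that defines the regularized Mahalanobis norm. Writing $A = \lambda\Sigma + (1-\lambda)I_m$, we have $\maha{v}^2 = v^\intercal A^{-1} v$, so the entire statement amounts to controlling the extreme eigenvalues of $A^{-1}$ and sandwiching the Rayleigh quotient between them, then taking square roots.

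First I would invoke the spectral decomposition $\Sigma = Q\Lambda Q^\intercal$ already set up in the proof of Theorem~\ref{theorem1}, with eigenvalues $\xi_1,\ldots,\xi_m > 0$ (positivity coming from positive definiteness of $\Sigma$). Then $A = Q\{\lambda\Lambda + (1-\lambda)I_m\}Q^\intercal$ has eigenvalues $\lambda\xi_i + (1-\lambda)$, so $A^{-1}$ has eigenvalues $1/(\lambda\xi_i + 1 - \lambda)$, all positive. The standard min--max bound for a symmetric positive-definite form then gives
$$\mu_{\min}(A^{-1})\,\euc{v}^2 \;\leq\; \maha{v}^2 \;\leq\; \mu_{\max}(A^{-1})\,\euc{v}^2,$$
where $\mu_{\min}$ and $\mu_{\max}$ denote the smallest and largest eigenvalues of $A^{-1}$.

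The substantive step is to translate these extreme eigenvalues into the quantities $m$ and $c$ in the statement, which requires care because inversion reverses the eigenvalue ordering. The smallest eigenvalue of $A^{-1}$ equals $1/(\lambda\xi_{\max} + 1 - \lambda)$, where $\xi_{\max}$ is the largest eigenvalue of $\Sigma$; since all $\xi_i$ are positive we have $\xi_{\max} \leq \sum_i \xi_i = \text{trace}(\Sigma) = m$, and this yields the lower bound on $\maha{v}$. Symmetrically, the largest eigenvalue of $A^{-1}$ equals $1/(\lambda\xi_{\min} + 1 - \lambda)$, and $\xi_{\min} \geq c$ by hypothesis, giving the upper bound. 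Taking square roots throughout finishes the argument. There is no genuine obstacle here; the only thing to get right is the direction-reversal under inversion, namely that the minimum of $A^{-1}$ is governed by the maximum of $\Sigma$ (bounded above by its trace $m$) while the maximum of $A^{-1}$ is governed by the minimum of $\Sigma$ (bounded below by $c$). Keeping these two pairings straight is exactly what places $m$ in the lower-bound denominator and $c$ in the upper-bound denominator.
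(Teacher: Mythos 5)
Your proposal is correct and follows essentially the same route as the paper: both rest on the spectral decomposition $\Sigma = Q\Lambda Q^\intercal$, the observation that positivity plus $\mathrm{trace}(\Sigma)=m$ forces every eigenvalue $\xi_i$ into $[c,m]$, and the resulting bounds $\frac{1}{\lambda m+1-\lambda} \leq \frac{1}{\lambda \xi_i + 1-\lambda} \leq \frac{1}{\lambda c+1-\lambda}$ applied to the quadratic form (the paper writes out the eigenbasis expansion $\sum_i \frac{1}{\lambda\xi_i+1-\lambda}(q_i^\intercal v)^2$ explicitly, which is just the proof of the Rayleigh-quotient bound you invoke as a black box). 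Your explicit attention to the direction reversal under inversion is exactly the point the paper handles implicitly, and your argument is complete.
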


\begin{proof}
Since $\Sigma$ is positive definite, it admits a spectral decomposition $\Sigma=Q\Lambda Q^\intercal$, where
$Q^\intercal = Q^{-1}$, and $\Lambda$ is a diagonal matrix with eigenvalues $\xi_1,\ldots,\xi_m$. Since by assumption of the minimum eigenvalue greater than $c>0$, and that $trace(\Sigma)=\sum_{i=1}^m\xi_i=m$, we have $\xi_i\in(c,m)$ for $i=1,\ldots,m$.
Then, the eigenvalues for $\{\lambda\Sigma+(1-\lambda)
    I_m\}^{-1}$
are $\frac{1}{\lambda\xi_1+1-\lambda},\ldots,\frac{1}{\lambda\xi_m+1-\lambda}$,
which are between $\frac{\epsilon}{\lambda m + 1 -\lambda}$ and $\frac{\epsilon}{\lambda c + 1 -\lambda}$.
Then for any vector $v\in\mathbb{R}^m$,
\begin{align*}
    \maha{v}^2=&v^\intercal\{\lambda\Sigma+(1-\lambda)
    I_m\}^{-1}v \\
    =&\sum_{i=1}^m \frac{1}{\lambda\xi_i+1-\lambda}(q_i^\intercal v)^2\\
    \leq &\frac{1}{\lambda c+1-\lambda} \sum_{i=1}^m(q_i^\intercal v)^2 = \frac{\|v\|_2^2}{\lambda c+1-\lambda},
\end{align*}
where $\sum_{i=1}^m(q_i^\intercal v)^2=\|Q^\intercal v\|_2^2=\|v\|_2^2$. Similarly, we can show that $\maha{v}^2\geq \frac{1}{\lambda m+1-\lambda}\|v\|_2^2$, so the result follows immediately.
\end{proof}

\begin{figure*}[h!]
    \centering
    \includegraphics[width=0.85\textwidth]{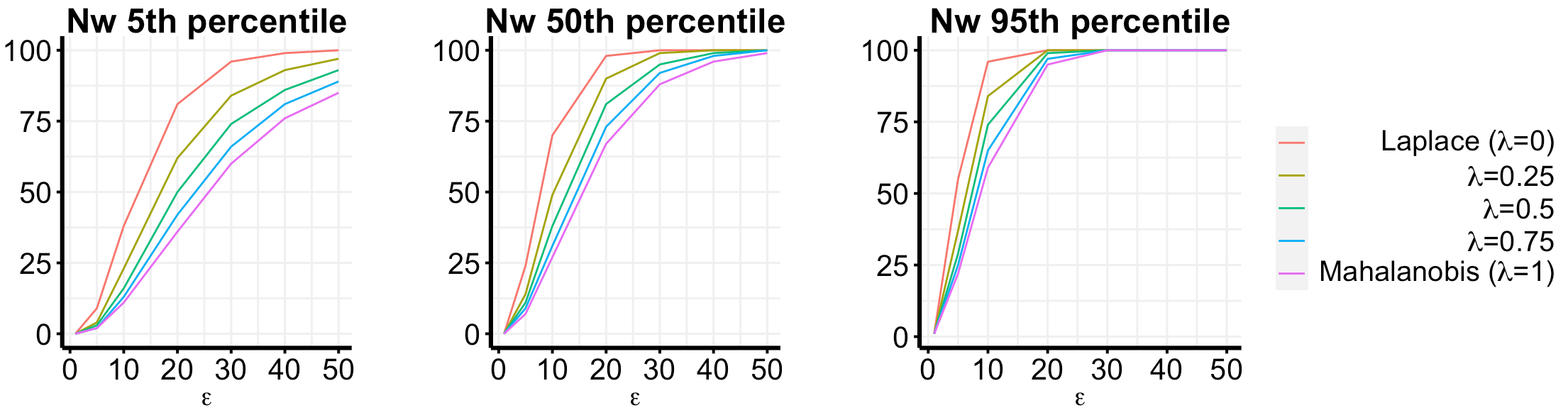}
    \caption{Percentiles for $N_w$ (number of times an input word $w$ does not change) for 300-d FastText embedding over 100 repetitions. The Mahalanobis mechanism has lower values of $N_w$ than the Laplace mechanism.}
    \label{nw_ft}
\end{figure*}

\begin{figure*}[h!]
    \centering
    \includegraphics[width=0.85\textwidth]{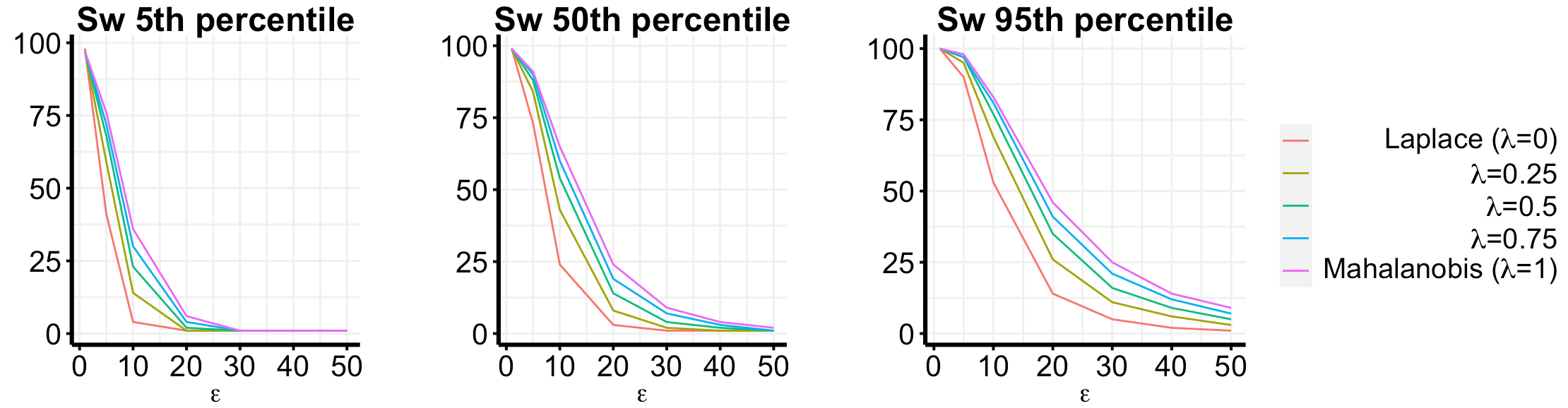}
    \caption{Percentiles for $S_w$ (number of distinct substitutions for an input word $w$) for 300-d FastText embedding over 100 repetitions. The Mahalanobis mechanism has higher values of $S_w$ than the Laplace mechanism.} 
    \label{sw_ft}
\end{figure*}

\begin{table*}[t!]
\begin{tabular}{llllll} 
\hline
 $\epsilon$ & 1 & 5 & 10 & 20 & 40 \\ \hline
 Laplace ($\lambda=0$) & 0.19$\pm$0.46 & 26.89$\pm$14.49 & 68.93$\pm$17.97  & 95.28$\pm$6.46 & 99.85$\pm$0.68\\  
 $\lambda=0.25$ & 0.15$\pm$0.41 & 16.66$\pm$10.53 & 50.31$\pm$18.68  & 86.34$\pm$12.41 & 98.69$\pm$2.81\\
 $\lambda=0.5$ &  0.13$\pm$0.38 & 12.68$\pm$8.60 & 40.40$\pm$17.41 & 78.44$\pm$15.40 & 96.91$\pm$4.88\\  
 $\lambda=0.75$ &  0.12$\pm$0.36 & 10.46$\pm$7.49 & 34.14$\pm$16.03 & 71.81$\pm$17.00 & 94.85$\pm$6.79\\  
 Mahalanobis ($\lambda=1$) & 0.10$\pm$0.35 & 8.97$\pm$6.74 & 29.73$\pm$14.83 & 66.24$\pm$17.90 & 92.80$\pm$8.31\\ \hline
\end{tabular}
\caption{Mean $\pm$ Standard Deviation for $N_w$ for 300-d FastText embedding. For $\epsilon=5,10,20$, the Mahalanobis mechanism has  significantly lower values of mean $N_w$ than the Laplace mechanism, where statistical significance is established by comparing the 95\% confidence intervals in the form of $mean$ $\pm$ $1.96\times std/\sqrt{100}$.}
\label{nw_tab}
\end{table*}

\begin{table*}[t!]
\begin{tabular}{llllll} 
\hline
 $\epsilon$ & 1 & 5 & 10 & 20 & 40 \\ \hline
 Laplace ($\lambda=0$) & 99.23$\pm$0.87 & 70.27$\pm$15.07 & 25.82$\pm$15.06 & 4.53$\pm$4.56 & 1.13$\pm$0.55\\  
 $\lambda=0.25$ & 99.18$\pm$0.90 & 81.42$\pm$11.12 & 42.59$\pm$16.56 & 10.31$\pm$8.08 & 1.93$\pm$1.80\\
 $\lambda=0.5$ & 99.07$\pm$0.96 &  85.69$\pm$9.10 & 52.26$\pm$15.96 &  15.66$\pm$10.18 & 2.98$\pm$2.83\\  
 $\lambda=0.75$ & 98.95$\pm$1.03 & 88.08$\pm$7.96 & 58.66$\pm$15.13 & 20.38$\pm$11.46 & 4.16$\pm$3.75\\  
 Mahalanobis ($\lambda=1$) & 98.88$\pm$1.04 & 89.58$\pm$7.17 & 63.28$\pm$14.25 & 24.60$\pm$12.37 & 5.32$\pm$4.51 \\ \hline
\end{tabular}
\caption{Mean $\pm$ Standard Deviation for $S_w$ for 300-d FastText embedding. For $\epsilon=5,10,20$, the Mahalanobis mechanism has  significantly higher values of mean $S_w$ than the Laplace mechanism, where statistical significance is established by comparing the 95\% confidence intervals in the form of $mean$ $\pm$ $1.96\times std/\sqrt{100}$.}
\label{sw_tab}
\end{table*} 

% \begin{proof}
% Since $\Sigma$ is positive definite, it admits a spectral decomposition $\Sigma=Q\Lambda Q^\intercal$, where
% $Q^\intercal = Q^{-1}$, and $\Lambda$ is a diagonal matrix with eigenvalues $\xi_1,\ldots,\xi_m$. Since by assumption of the minimum eigenvalue greater than $c>0$, and that $trace(\Sigma)=\sum_{i=1}^m\xi_i=m$, we have $\xi_i\in(c,m)$ for $i=1,\ldots,m$.
% Then, the eigenvalues for $\{\lambda\Sigma+(1-\lambda)
%     I_m\}^{-1}$
% are $\frac{1}{\lambda\xi_1+1-\lambda},\ldots,\frac{1}{\lambda\xi_m+1-\lambda}$,
% which are between $\frac{\epsilon}{\lambda m + 1 -\lambda}$ and $\frac{\epsilon}{\lambda c + 1 -\lambda}$.
% Then for any vector $v\in\mathbb{R}^m$,
% \begin{align*}
%     \maha{v}^2=&v^\intercal\{\lambda\Sigma+(1-\lambda)
%     I_m\}^{-1}v \\
%     =&\sum_{i=1}^m \frac{1}{\lambda\xi_i+1-\lambda}(q_i^\intercal v)^2\\
%     \leq &\frac{1}{\lambda c+1-\lambda} \sum_{i=1}^m(q_i^\intercal v)^2 = \frac{\|v\|_2^2}{\lambda c+1-\lambda},
% \end{align*}
% where $\sum_{i=1}^m(q_i^\intercal v)^2=\|Q^\intercal v\|_2^2=\|v\|_2^2$. Similarly, we can show that $\maha{v}^2\geq \frac{1}{\lambda m+1-\lambda}\|v\|_2^2$, so the result follows immediately. %\abhi{Is there a square root missing from the bounds? Does it affect anything in the proof or the algorithm?}
% \end{proof}

The lemma below then follows.
\begin{lemma}\label{theorem21}
Assume $trace(\Sigma)=m$ and the minimum eigenvalue of $\Sigma$ is greater than $c$ for some constant $c>0$, then for any $w,w'\in\mathcal{W}$, then
\begin{align*}
    & \exp\paran{\epsilon\maha{\phi(w)-\phi(w')}} \\
    \leq & \exp\paran{\frac{\epsilon}{\sqrt{\lambda c + 1 -\lambda}}\euc{\phi(w)-\phi(w')}}, \;\;\;\textrm{and} \\
    & \exp\paran{\epsilon\maha{\phi(w)-\phi(w')}} \\
    \geq &
    \exp\paran{\frac{\epsilon}{\sqrt{\lambda m + 1 -\lambda}}\euc{\phi(w)-\phi(w')}}.
\end{align*}
\end{lemma}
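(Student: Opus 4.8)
The plan is to obtain this statement as an immediate corollary of the preceding norm-comparison lemma, applied to the difference vector $v = \phi(w) - \phi(w')$. The hypotheses of the present lemma, namely $\mathrm{trace}(\Sigma) = m$ and that the smallest eigenvalue of $\Sigma$ strictly exceeds $c$, are precisely those under which the earlier two-sided bound
\begin{align*}
\frac{\euc{v}}{\sqrt{\lambda m + 1 - \lambda}} \leq \maha{v} \leq \frac{\euc{v}}{\sqrt{\lambda c + 1 - \lambda}}
\end{align*}
was established, so I may invoke it directly with $v = \phi(w) - \phi(w')$.

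First I would instantiate the display above with this choice of $v$, giving upper and lower bounds on $\maha{\phi(w)-\phi(w')}$ in terms of $\euc{\phi(w)-\phi(w')}$. Next I would multiply each inequality by $\epsilon > 0$; since $\epsilon$ is positive, this preserves the direction of both inequalities. Finally I would apply the exponential map to each side. Because $\exp$ is strictly increasing on $\mathbb{R}$, both inequalities are preserved, and the two claimed bounds on $\exp\paran{\epsilon\maha{\phi(w)-\phi(w')}}$ follow at once.

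The main point to verify is really just the bookkeeping: that the hypotheses line up with those of the earlier lemma, and that positivity of $\epsilon$ together with monotonicity of $\exp$ keep the inequalities oriented correctly. There is no genuine analytic obstacle here — the substantive work, namely the spectral argument bounding the eigenvalues of $\{\lambda\Sigma + (1-\lambda)I_m\}^{-1}$ between $1/(\lambda m + 1 - \lambda)$ and $1/(\lambda c + 1 - \lambda)$, has already been carried out in the preceding lemma. The only subtlety worth flagging is that, although that lemma is stated for $v \in \mathbb{R}^2$, its proof is dimension-agnostic, so it applies verbatim to the $m$-dimensional embeddings $\phi(w), \phi(w') \in \mathbb{R}^m$ that we need here.
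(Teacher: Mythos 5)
Your proposal matches the paper's own derivation exactly: the paper states this lemma with the remark ``The lemma below then follows,'' meaning it is obtained precisely by instantiating the preceding norm-comparison lemma at $v = \phi(w)-\phi(w')$, scaling by $\epsilon>0$, and exponentiating, just as you describe. Your observation that the preceding lemma's statement says $v\in\mathbb{R}^2$ (evidently a typo, since its proof is carried out for arbitrary $v\in\mathbb{R}^m$) is a correct and worthwhile clarification, but it does not change the argument.
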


The fact that the probability ratio as a function of $\maha{\cdot}$ can be sandwiched by lower and upper bounds as functions of $\euc{\cdot}$ shows the noise scale $\epsilon$ is comparable between the Mahalanobis mechanism and multivariate Laplace mechanism. 

\section{Experiments}

We empirically compare the proposed Mahalanobis mechanism and the existing multivariate Laplace mechanism in both privacy experiments and utility experiments on the following two datasets (more details in Appendix \ref{additional}):

\begin{itemize}
    \item {\bf{Twitter Dataset}}. This is a publicly available Kaggle competition dataset (\url{https://www.kaggle.com/c/nlp-getting-started}),
    which contains 7,613 tweets, each with a label indicating whether the tweet describes a disaster event (43\% disaster).  
    \item {\bf{SMSSpam Dataset}}. This is a publicly available dataset from the UCI machine learning repository (\url{https://archive.ics.uci.edu/ml/datasets/SMS+Spam+Collection}), which contains 5,574 (13\% spam) SMS labeled messages  collected for mobile phone spam research \cite{almeida2011contributions}. 
\end{itemize}    

\subsection{Privacy experiments}

\begin{figure*}[h!]
    \centering
    \includegraphics[width=0.9\textwidth]{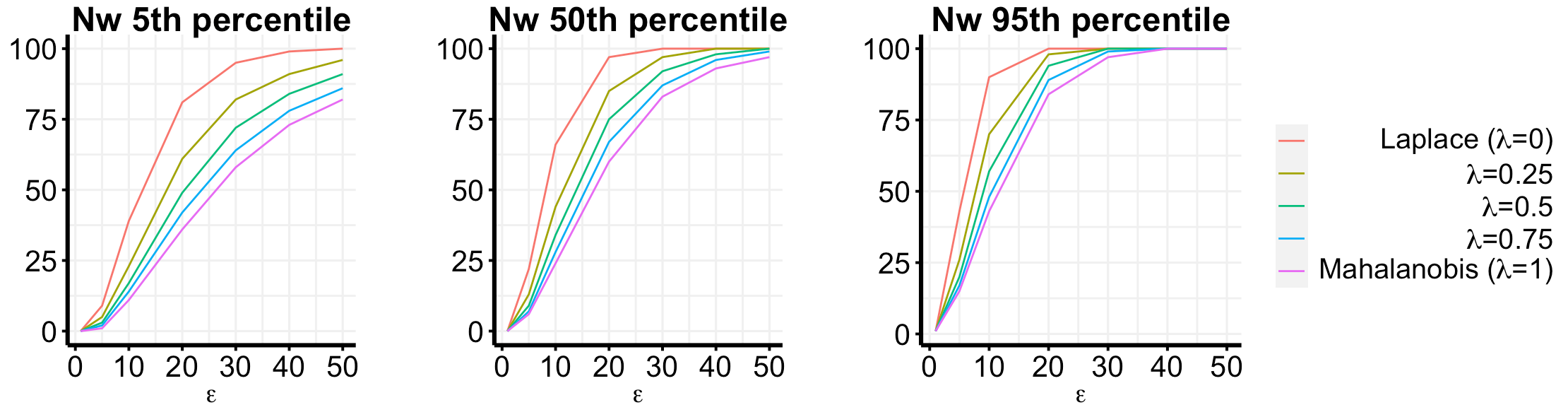}
    \caption{Percentiles for $N_w$ (number of times an input word $w$ does change) for 300-d GloVe embedding over 100 repetitions. The Mahalanobis mechanism has lower values of $N_w$ than the Laplace mechanism.}
    \label{nw_bt}
\end{figure*}

\begin{figure*}[h!]
    \centering
    \includegraphics[width=0.9\textwidth]{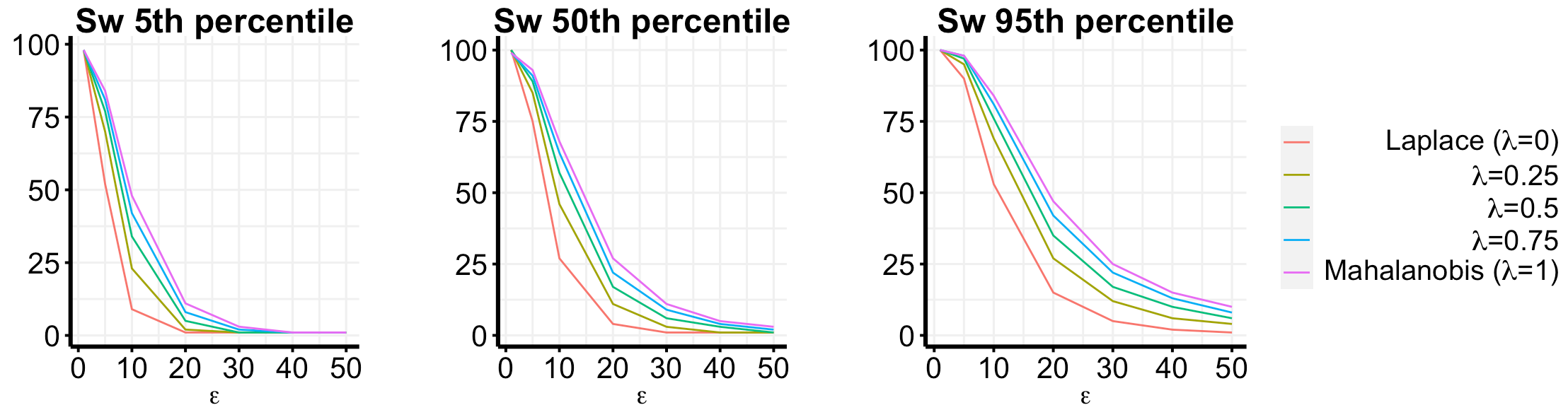}
    \caption{Percentiles for $S_w$ (number of distinct substitutions for an input word $w$) for 300-d GloVe embedding over 100 repetitions. The Mahalanobis mechanism has higher values of $S_w$ than the Laplace mechanism.} 
    \label{sw_bt}
\end{figure*}

\begin{table*}[t!]
\begin{tabular}{llllll} 
\hline
 $\epsilon$ & 1 & 5 & 10 & 20 & 40 \\ \hline
 Laplace ($\lambda=0$) & 0.16$\pm$0.41 & 23.71$\pm$10.69 & 65.29$\pm$15.53 & 94.49$\pm$6.48 & 99.81$\pm$0.74 \\  
 $\lambda=0.25$ & 0.11$\pm$0.34 & 13.82$\pm$6.89 & 44.83$\pm$14.17 & 83.17$\pm$11.61 & 98.22$\pm$3.31 \\
 $\lambda=0.5$ & 0.09$\pm$0.31 & 10.20$\pm$5.57 & 34.91$\pm$12.12 & 73.78$\pm$13.59 & 95.85$\pm$5.56\\  
 $\lambda=0.75$ & 0.08$\pm$0.29 & 8.16$\pm$4.87 & 28.97$\pm$10.72 & 66.29$\pm$14.35 & 93.26$\pm$7.30\\  
 Mahalanobis ($\lambda=1$) & 0.08$\pm$0.28 & 6.81$\pm$4.36 & 24.90$\pm$9.60 & 60.27$\pm$14.54 & 90.60$\pm$8.65\\ \hline
\end{tabular}
\caption{Mean $\pm$ Standard Deviation for $N_w$ for 300-d GloVe embedding. For $\epsilon=5,10,20$, the Mahalanobis mechanism has significantly lower values of mean $N_w$ than the Laplace mechanism, where statistical significance is established by comparing the 95\% confidence intervals in the form of $mean$ $\pm$ $1.96\times std/\sqrt{100}$.}
\label{nw_tab_bt}
\end{table*}

\begin{table*}[t!]
\begin{tabular}{llllll} 
\hline
 $\epsilon$ & 1 & 5 & 10 & 20 & 40 \\ \hline
 Laplace ($\lambda=0$) & 99.46$\pm$0.74 & 73.20$\pm$11.60 & 28.56$\pm$13.61 & 5.18$\pm$4.64 & 1.16$\pm$0.62 \\  
 $\lambda=0.25$ & 99.45$\pm$0.73 & 84.30$\pm$7.62 & 46.08$\pm$13.98 & 12.12$\pm$7.69 & 2.21$\pm$2.02 \\
 $\lambda=0.5$ &  99.38$\pm$0.79 & 88.39$\pm$6.08 & 56.07$\pm$12.92 & 17.96$\pm$9.33 & 3.55$\pm$3.06 \\  
 $\lambda=0.75$ & 99.30$\pm$0.83 & 90.71$\pm$5.27 & 62.69$\pm$11.84 & 23.11$\pm$10.42 & 4.94$\pm$3.90 \\  
 Mahalanobis ($\lambda=1$) & 99.24$\pm$0.86 & 92.17$\pm$4.65 & 67.45$\pm$10.86 & 27.51$\pm$11.12 & 6.39$\pm$4.57\\ \hline
\end{tabular}
\caption{Mean $\pm$ Standard Deviation for $S_w$ for 300-d GloVe embedding. For $\epsilon=5,10,20$, the Mahalanobis mechanism has  significantly higher values of mean $S_w$ than the Laplace mechanism, where statistical significance is established by comparing the 95\% confidence intervals in the form of $mean$ $\pm$ $1.96\times std/\sqrt{100}$.}
\label{sw_tab_bt}
\end{table*}

In the privacy experiments, we compare the Mahalanobis mechanism with the multivariate Laplace mechanism on the following two privacy statistics:
\begin{enumerate}
    \item $N_w=\Pr\{M(w)=w\}$, which is the probability of the word not getting redacted in the mechanism. This is approximated by counting the number of times an input word $w$ does not get replaced by other words after running the mechanism for 100 times.
    \item $S_w=|\{w'\in\mathcal{W}: \Pr\{M(w)=w'\}\geq \eta|$,  which is the number of distinct words that have a probability greater than $\eta$ of being the output of $M(w)$. This is approximated by counting the number of distinct substitutions for an input word $w$ after running the mechanism for 100 times.
\end{enumerate}

We note that $N_w$ and $S_w$ have been previously used in privacy-preserving text analysis literature
to qualitatively characterize the privacy guarantee \cite{feyisetan2019leveraging,feyisetan2020privacy}.
We make the following connection between the privacy statistics ($N_w$ and $S_w$) with the DP privacy budget $\epsilon$:
a smaller $\epsilon$ corresponds to a stronger privacy guarantee by adding a larger scale ($1/\epsilon$) of noise in the mechanism, which leads to fewer unperturbed words (lower $N_w$) and more diverse outputs for each word (higher $S_w$).
For any fixed noise scale of $1/\epsilon$, the mechanism with a better privacy guarantee will have a lower value of $N_w$ and higher value of $S_w$. 
%The matrix $\Sigma$ is computed as the scaled sample covariance matrix of the word embeddings such that $trace(\Sigma)=trace(I_m)$. 

In Figure \ref{nw_ft} and \ref{sw_ft}, we summarize how the $5^{th}$, $50^{th}$, and $95^{th}$ percentiles of $N_w$ and $S_w$ change in different configurations of the mechanisms using the 300-d FastText embedding \cite{bojanowski2017enriching}. 
The vocabulary set includes 28,596 words in the vocabulary union from the two real datasets.
For all $5^{th}$, $50^{th}$, and $95^{th}$ percentiles, the Mahalanobis mechanism has a lower value of $N_w$ and a higher value of $S_w$ as compared to the multivariate Laplace mechanism, which indicates an improvement in privacy statistics. 

Table \ref{nw_tab} and \ref{sw_tab} compare the mean and standard deviation of $N_w$ and $S_w$ across different settings.
The mean $N_w$ converges to 0 as $\epsilon$ decreases and converges to 100 as $\epsilon$ increases. An opposite trend is observed for $S_w$, which is as expected. In the middle range of privacy budget ($\epsilon=5,10,20$), the proposed Mahalanobis mechanism has significantly lower values of $N_w$ and higher values of $S_w$, where the statistical significance is established by comparing the 95\% confidence intervals for mean $N_w$ and $S_w$ in the form of $mean$ $\pm$ $1.96\times std/\sqrt{100}$. 
While the scale of the noise is controlled to be the same across settings, the probability that a word does not change becomes smaller and the number of distinct substitutions becomes larger in our proposed mechanism. This shows the advantage of the Mahalanobis mechanism over the multivariate Laplace mechanism in privacy statistics.

The results are qualitatively similar when we repeat the same set of privacy experiments using the 300-d GloVe embeddings \cite{pennington2014glove}. As can be seen in Figure \ref{nw_bt}, Figure \ref{sw_bt}, Table \ref{nw_tab_bt}, and Table \ref{sw_tab_bt}), the Mahalanobis mechanism has lower values of $N_w$ and higher values of $S_w$ compared to the Laplace mechanism, which demonstrates a better privacy guarantee.

\begin{figure*}[h!]
    \centering
    \includegraphics[width=0.85\textwidth]{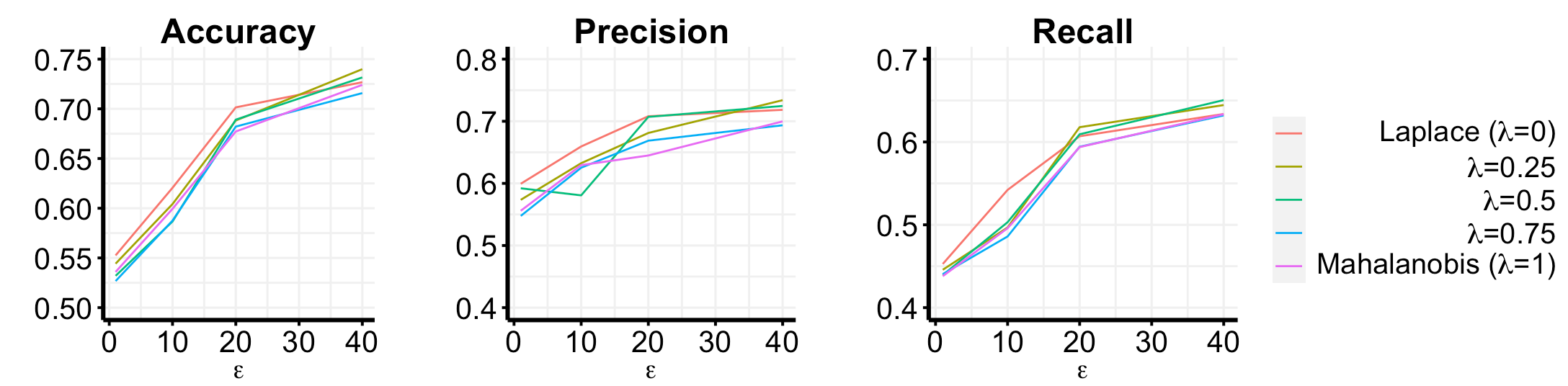}
    \caption{Text classification results on Twitter Dataset. There is no significant difference across mechanisms in terms of accuracy, precision, and recall. This shows the utility is maintained at the same level in the proposed Mahalanobis mechanism across $\lambda$.}
    \label{twitter}
\end{figure*}

\begin{figure*}[h!]
    \centering
    \includegraphics[width=0.85\textwidth]{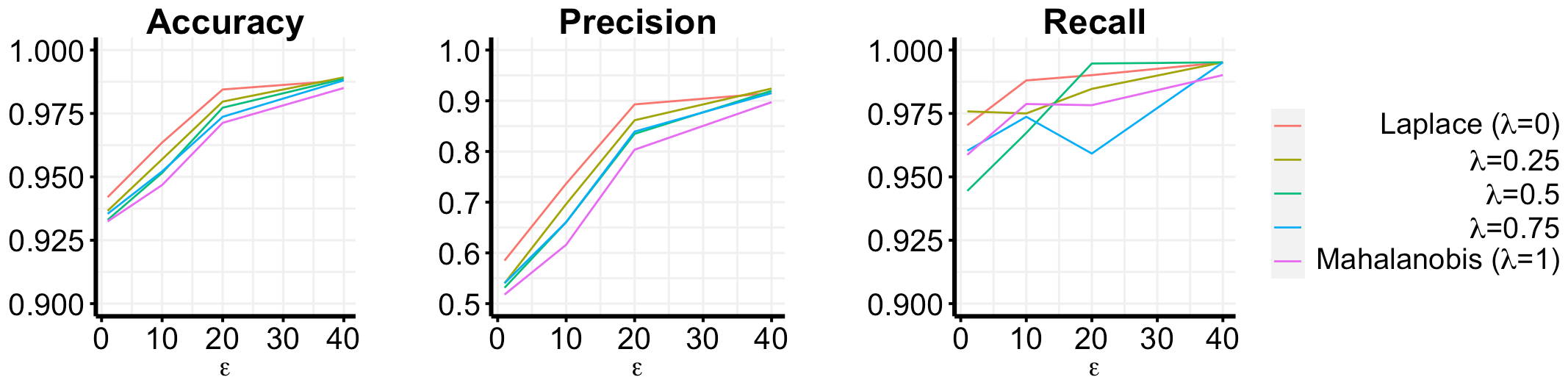}
    \caption{Text classification results on SMSSpam Dataset. There is no significant difference across mechanisms in terms of accuracy, precision, and recall. This shows the utility is maintained at the same level in the proposed Mahalanobis mechanism across $\lambda$.}
    \label{spam}
\end{figure*}

\subsection{Utility Experiments}

In the utility experiments, we compare the Mahalanobis mechanism with the multivariate Laplace mechanism in terms of text classification performance on the two real datasets. 

On the Twitter Dataset, the task is to classify whether a tweet describes a disaster event, where the benchmark FastText model \cite{joulin2016bag} achieves 0.78 accuracy, 0.78 precision, and 0.69 recall.
On the SMSSpam Dataset, the task is spam classification, where the
benchmark Bag-of-Words model achieves 0.99 accuracy, 0.92 precision, and 0.99 recall.   

In both tasks, we use 70\% of the data for training, and 30\% of the data for testing. The word embedding vectors are from 300-d FastText. 
Figure \ref{twitter} and Figure \ref{spam} present the utility results in terms of accuracy, precision and recall on two text classification tasks, respectively.
As a general trend in both Twitter and SMSSpam Dataset, the classification accuracy increases with $\epsilon$ and eventually approaches the benchmark performance, which is as expected. 
There are cases where the recall drops when $\epsilon$ increases in SMSSpam Dataset, but such drop is not significant as the recall values are all around 0.95 or higher.
Across the range of $\lambda$, the difference between utility is negligible between the Mahalanobis mechanism and the multivariate Laplace mechanism.
Together with results in Section 5.1, we conclude that our proposed mechanism improves the privacy statistics while maintaining the utility at the same level.

\section{Conclusions}

We develop a differentially private Mahalanobis mechanism for text perturbation. Compared to the existing multivariate Laplace mechanism, our mechanism exploits the geometric property of elliptical noise so as to improve the privacy statistics while maintaining a similar level of utility. 
Our method can be readily extended to the privacy-preserving analysis on other natural language processing tasks, where utility can be defined according to specific needs. 

We remark that the choice of $\Sigma$ as the global covariance matrix of the word embeddings can be generalized to the personalized covariance matrix within the neighborhood of each word. In this sense, local sensitivity can be used instead of global sensitivity to calibrate the privacy-utility tradeoff.
This can be done by adding a preprocessing clustering step on the word embeddings in the vocabulary, and then perform the Mahalanobis mechanism within each cluster using the cluster-specific covariance matrix. 

Furthermore, the choice of the tuning parameter $\lambda$ can also be formulated as an optimization problem with respect to pre-specified privacy and utility constraints. Since $\lambda$ is the only tuning parameter on a bounded interval of $[0,1]$, a grid search would suffice, which can be conducted by finding the $\lambda$ value that maximizes the utility (privacy) objective given the fixed privacy (utility) constraints.

\bibliographystyle{acl_natbib}
\bibliography{emnlp2020}
\appendix
\section{Additional Information on the Twitter and SMSSpam Data}\label{additional}

\begin{table}[h]
\begin{tabular}{lll} 
\hline
 Data & Twitter & SMSSpam \\ \hline
 Number of records & 7,613 & 5,574 \\
 Words per record & 7.3$\pm$3.0 & 15.6$\pm$11.4 \\
 Vocabulary size & 22,213 & 9,515 \\ 
 $d_{max} / d_{min}$ & 35.28 & 8.92 \\
 $\overline{d}_{:50} / \overline{d}_{-50:}$ & 4.47 & 5.67 \\
 $\overline{p}_{:50}$ & 0.1\% & 0.4\% \\
 \hline
\end{tabular}
\caption{$d_{max}$ is the largest Euclidean distance to the nearest word embedding in the vocabulary; $d_{min}$ is the smallest Euclidean distance to the nearest word embedding in the vocabulary; $\overline{d}_{:50}$ is the mean distance to the nearest word embedding for the top 50 words in the sparse regions (largest distance to nearest neighbors); $\overline{d}_{-50:}$ is the mean distance to the nearest word embedding for the top 50 words in the dense regions (smallest distance to nearest neighbors).
These ratio statistics demonstrate the heterogeneity in the density of the word embedding space. $\overline{p}_{:50}$ is the mean percentage of records that contains the top 50 words in the sparse regions, which suggests that those words in the sparse regions are indeed rare words that can be used to link to specific records.}
\label{twitter_smsspam}
\end{table}

\end{document}